\newcommand{\deriv}[2]{\ensuremath{\frac{\partial #1}{\partial #2}}}
\newcommand{\SO}{\ensuremath{\mathsf{SO(3)}}}
\newcommand{\so}{\ensuremath{\mathfrak{so}(3)}}
\renewcommand{\Re}{\ensuremath{\mathbb{R}}}
\newcommand{\Sph}{\ensuremath{\mathsf{S}}}
\title{\LARGE \bf
	Equivariant Reinforcement Learning for Quadrotor UAV
}
\author{Beomyeol Yu and Taeyoung Lee
	\thanks{Beomyeol Yu, Taeyoung Lee, Mechanical and Aerospace Engineering, George Washington University, Washington, DC 20051 {\tt \{yubeomyeol,tylee\}@gwu.edu}}%
	\thanks{\textsuperscript{\footnotesize\ensuremath{*}}This research has been supported in part by NSF under the grants CNS-1837382 and CMMI-1760928.}%
}
\newtheorem{prop}{Proposition}
\def\endthebibliography{%
	\def\@noitemerr{\@latex@warning{Empty `thebibliography' environment}}%
	\endlist
}
\begin{document}
	\allowdisplaybreaks
	\maketitle \thispagestyle{empty} \pagestyle{empty}
	
	\begin{abstract}
		This paper presents an equivariant reinforcement learning framework for quadrotor unmanned aerial vehicles. 
		Successful training of reinforcement learning often requires numerous interactions with the environments, which hinders its applicability especially when the available computational resources are limited, or when there is no reliable simulation model.
		We identified an equivariance property of the quadrotor dynamics such that the dimension of the state required in the training is reduced by one, thereby improving the sampling efficiency of reinforcement learning substantially. 
		This is illustrated by numerical examples with popular reinforcement learning techniques of TD3 and SAC.
	\end{abstract}

	\section{Introduction}
	
	Deep reinforcement learning (DRL) algorithms have been successfully applied to optimal control of complex dynamic systems through an interactive learning process.
	They have made remarkable advancements in various applications such as games \cite{mnih2015human} or natural language processing \cite{choi2017coarse}.
	Recently, their application domain has been extended to robotics, including unmanned aerial systems, such as drone racing \cite{song2021autonomous} and payload transportation of quadrotors \cite{belkhale2021model}.
	Compared to traditional model-based control methods \cite{bouabdallah2004pid,xu2006sliding,lee2010geometric}, DRL-based control does not require any exact mathematical model to achieve its goals.
	
	Prior works in reinforcement learning of quadrotors have mainly focused on introducing DRL-based control strategies for hovering and trajectory tracking tasks~\cite{pi2020low, hwangbo2017control}.
	For example, to enhance tracking accuracy, a stochastic policy has been trained in \cite{lopes2018intelligent} with Proximal Policy Optimization (PPO) for an on-policy method, and Twin Delayed Deep Deterministic Policy Gradient (TD3) has been adopted in \cite{shehab2021low} to develop a deterministic policy as an off-policy technique.
	In~\cite{rodriguez2019deep}, Deep Deterministic Policy Gradients (DDPG) have been used for autonomous landing on a moving platform, which has been validated by simulation and flight experiments.
	In the study of \cite{molchanov2019sim} and \cite{wang2019deterministic}, the authors have focused on mitigating the \textit{reality gap} that appears when transferring the policy trained in simulation into the real world, while greatly improving robustness.
	However, since DRL-based control is a data-driven approach relying on deep neural networks, most of their recent successes have faced the challenge of handling complex and high-dimensional data.
	This process often requires numerous samples for successful learning, thereby degrading its efficiency in both computation and learning. 
	
	\begin{figure}[t]
		\centering
		\begin{picture}(145,135)
		\put(0,0){\includegraphics[scale=0.3]{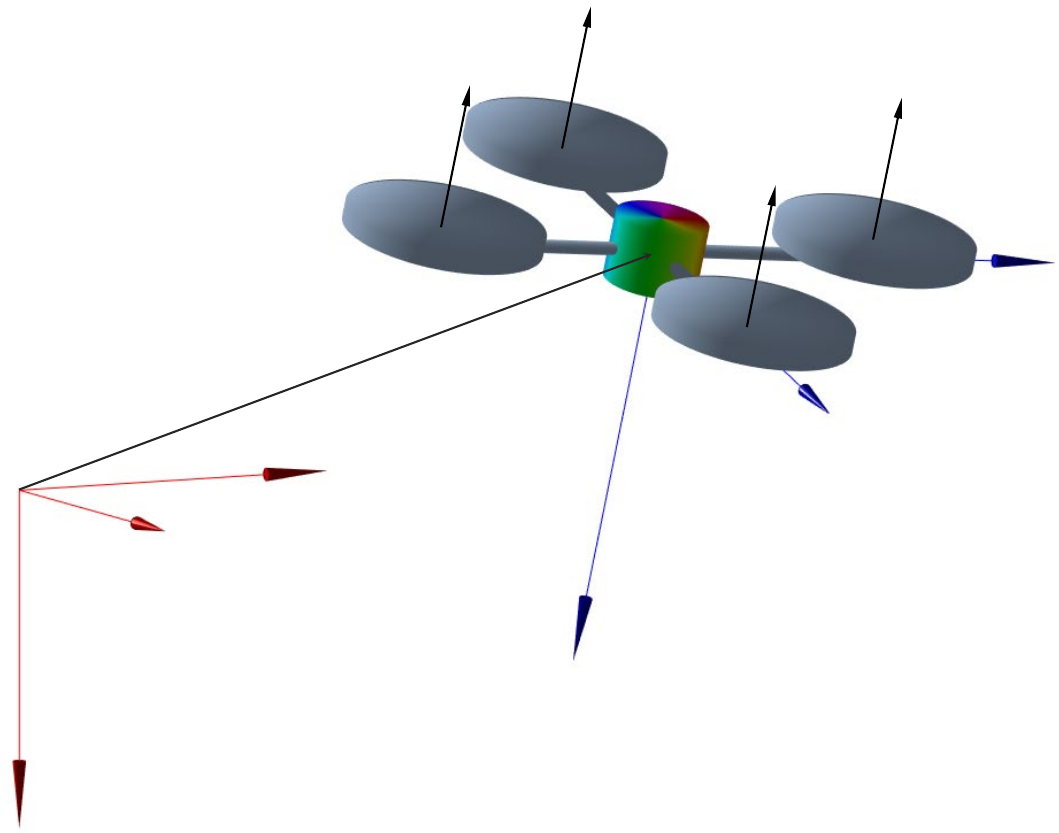}}
		\put(49,49){{\small $\vec e_1$}}
		\put(26,39){{\small $\vec e_2$}}
		\put(0,-7){{\small $\vec e_3$}}		
		\put(155,79){{\small $\vec b_1$}}
		\put(118,50){{\small $\vec b_2$}}
		\put(80,15){{\small $\vec b_3$}}
		\put(53,62){{\small $x\in\Re^3$}}
		\put(135,64){{\footnotesize $R\in\SO$}}
		\put(127,108){{\small $T_1$}}
		\put(109,96){{\small $T_2$}}
		\put(65,110){{\small $T_3$}}
		\put(82,121){{\small $T_4$}}
		\end{picture}
		\caption{Quadrotor Model}
		\label{fig:DM}
        \vspace*{-0.3cm}
	\end{figure}
	
	A popular way to address this issue in deep learning is \textit{equivariant neural network}, which is part of a broad theme of geometric deep learning~\cite{bronstein2017geometric}.
	Equivariant models can improve sample efficiency and generalization capability by directly utilizing the geometric relationship between the input and output data, such as in translation, rotation, or permutation.
	The concept of equivariant learning was first proposed in~\cite{cohen2016group}, and it has been actively adopted in computer vision~\cite{weiler2019general}.
	In reinforcement learning, reflectional and rotational equivariance have been utilized in the formulation of homomorphic networks for discrete actions~\cite{van2020mdp,van2021multi}.
	Another approach has employed an equivariant architecture for vision-based robotic manipulation in Q-learning and actor-critic methods~\cite{wang2022mathrm}.	
		
	In this paper, we propose an equivariant framework of reinforcement learning for quadrotor low-level control, which directly maps the state of the quadrotor to motor control signals.
	Particularly, we identify a rotational symmetry, where the optimal control represented in the body-fixed frame is invariant under the rotation about the gravity direction. 
    By embedding this particular structure in an actor-critic architecture, the dimension of the sample trajectories used in the training is reduced by one, thereby improving the sampling efficiency substantially. 
	Data efficiency is particularly important in aerial robotics with a large-dimensional continuous state-action space.
	Further, as quadrotors are inherently unstable, it is critical to safely complete learning with a minimal number of trials. 
	We compare our agents trained with the proposed equivariant framework with non-equivariant counterparts in TD3 and SAC, to show significant computational advantages. 

	In short, our main contributions are two-fold.
	First, a data-efficient reinforcement learning scheme is proposed for quadrotors where the dimension of sample trajectories is reduced by one. 
    Second, it is shown that the proposed framework successfully improves the convergence of reinforcement learning through numerical simulations.

    \section{Problem Formulation}\label{sec:PF}
	We are interested in solving the problem of quadrotor low-level control. 
	This section provides a theoretical background in quadrotor dynamics and reinforcement learning.

	\subsection{Quadrotor Dynamics}\label{sec:DM}
	
	Consider a quadrotor unmanned aerial vehicle, illustrated at Fig. \ref{fig:DM}.
	Let $\{\vec e_1,\vec e_2,\vec e_3\}$ be the axes of the inertial frame, where the third axis $\vec e_3$ is aligned along the gravity pointing downward.
	And let $\{\vec b_{1},\vec b_{2},\vec b_{3}\}$ be the body-fixed frame located at the mass center of the quadrotor.
	The first two axes are aligned toward the center of the corresponding rotors, such that the third axis points downward when hovering.
	The position and the velocity of the quadrotor in the inertial frame are denoted by $x\in\Re^3$ and $v\in\Re^3$, respectively.
	The attitude is defined by the rotation matrix $R\in\SO=\{R\in\Re^{3\times 3}\,|\, R^T R=I_{3\times 3},\; \mathrm{det}[R]=1\}$, which is the linear transformation of the representation of a vector from the body-fixed frame to the inertial frame. 
	The angular velocity vector resolved in the body-fixed frame is $\Omega\in\Re^3$.
	
	The equations of motion for the quadrotor are given by 
	\begin{gather}
	\dot x  = v,\label{eqn:x_dot}\\
	m \dot v = mge_3 - f R e_3,\label{eqn:v_dot}\\
	\dot R = R\hat\Omega,\label{eqn:R_dot}\\
	J\dot \Omega + \Omega\times J\Omega = M,\label{eqn:W_dot}
	\end{gather}
	where the \textit{hat map} $\hat\cdot:\Re^3\rightarrow\so=\{S\in\Re^{3\times 3}\,|\, S^T = -S\}$ is defined by the condition that $\hat x y=x\times y$ and $\hat x$ is skew-symmetric for any $x,y\in\Re^3$.
	The inverse of the hat map is denoted by the \textit{vee map} $\vee:\so\rightarrow\Re^3$
	Also, $m\in\Re$ and $J\in\Re^{3\times 3}$ are the mass, and the inertia matrix of the quadrotor with respect to the body-fixed frame, respectively, and $g\in\Re$ is the gravitational acceleration.
	From the thrust of each motor denoted by $(T_1,T_2,T_3,T_4)$, the total thrust $f = \sum_{i=1}^{4} T_i \in\Re$ and the total moment $M \in\Re^3$ resolved in the body-fixed frame can be computed by
	\begin{gather}
	\begin{bmatrix} 
	f \\ M_1 \\ M_2 \\ M_3 
	\end{bmatrix}
	=
	\begin{bmatrix}
	1 & 1 & 1 & 1 \\
	0 & -d & 0 & d \\
	d & 0 & -d & 0 \\
	c_{\tau f} & -c_{\tau f} & c_{\tau f} & -c_{\tau f} 
	\end{bmatrix}
	\begin{bmatrix} 
	T_1 \\ T_2 \\ T_3 \\ T_4
	\end{bmatrix},
	\end{gather}
	where $d\in\Re$ is the distance between the center of any rotor and the third body-fixed axis, and $c_{\tau f} \in\Re$ is a constant relating the thrust and the resulting reactive torque.

	\subsection{Markov Decision Process} 
	Markov decision process (MDP) is an extension of Markov chains augmented by actions and rewards, which describe the choices available for each state and the objective to achieve.
	Specifically, it is defined by a tuple $\mathcal{M} = ( \mathcal{S}, \mathcal{A}, \mathcal{R}, \mathcal{T}, \gamma )$, where $\mathcal{S}$ is the state space, $\mathcal{A}$ is the action space, and $\mathcal{R}: \mathcal{S} \times \mathcal{A} \rightarrow \mathbb{R}$ is the reward function.
	Next, $\mathcal{T: \mathcal{S} \times \mathcal{A} \rightarrow \mathcal{P}(\mathcal{S})}$ denotes the state transition probability.
	For example, in a discrete-time setting, it is specified by $P(s_{t+1}|s_t, a_t)$, i.e., the distribution of the state at the next time step, for the given state and action at the current step.  
	At each time step $t$, the agent takes an action $a_t$ drawn from a policy $\pi(a_t|s_t)$, which is the distribution of the action conditioned by the state, receives a reward $r_t$, and a next state is determined by the transition probability function.
	This sequence of state-action pairs is called a trajectory or rollout $\tau = (s_0, a_0, s_1, a_1, \cdots, s_T,  a_T)$.
	The goal of MDP is to identify an optimal policy $\pi^*(a_k|s_k)$ that maximizes the expected return $J_t = \sum_{k=0}^{\infty} \gamma^k r_{t+k+1}$ where $\gamma \in [0, 1]$ is a discount factor.
    Reinforcement learning addresses MDP by constructing a policy iteratively while interacting with the dynamic system. 

	\subsection{Reinforcement Learning for Quadrotor}
	
	In this paper, the control objective is to find an optimal policy such that for a given desired position $x_d\in\Re^3$, $x\rightarrow x_d$ as $t\rightarrow\infty$.
    This is formulated as MDP as follows. 
    The state  and the action of the quadrotor are given by $s = (x, v, R, \Omega) \in \mathcal{S} = \Re^{9}\times \SO $ and $a = $$(T_{1}, T_{2}, T_{3}, T_{4})\in \mathcal{A} = \Re^4$, respectively. 
    The state transition probability is determined by the equations of motion \eqref{eqn:x_dot}--\eqref{eqn:R_dot}, which can be discretized according to a numerical integration scheme. 
	Next, to achieve the above stabilization objective, the reward function is defined as
	\begin{align}
        r_t (s_t, a_t) & = c_x (1 - \|e'_{x_t}\|) - c_v \|v_t\| - c_\Omega \|\Omega_t\| \nonumber \\
                   & \quad - c_a \|a_t - a_{t-1}\|. \label{eqn:reward}
	\end{align}
    where the constants $c$ are positive weighting factors, and $e'_{x_t} = (x_t - x_d)/e_{x_{\max}}\in\Re^3$ is the position error normalized such that $\|e'_x\|\leq 1$ always. 
    More specifically, it is assumed that the $i$-th element of $x$ is in the domain of $[x_{d_i} - e_{x_{\max}}, x_{d_i} + e_{x_{\max}}]$ with a prescribed $e_{x_{\max}}>0$ for $i\in\{1,2,3\}$, and any rollout is terminated once it is violated. 
    In \eqref{eqn:reward}, the first term is to minimize the scaled position error, and the next two terms are to mitigate aggressive motions. 
    The last term is to discourage chattering in control inputs, where it is considered that the prior action $a_{t-1}$ is prescribed at the $t$-th step with the convention of $a_{-1}=a_0$.
	
    The presented MDP for the quadrotor can be addressed by any reinforcement learning schemes that can handle continuous state spaces and action spaces, such as Twin Delayed Deep Deterministic Policy Gradient (TD3)~\cite{fujimoto2018addressing}, and Soft Actor-Critic (SAC)~\cite{haarnoja2018soft}.
	While these algorithms have been successfully applied in various challenging applications, they often require a massive amount of data to train their agents successfully.
    This motivates the proposed equivariant reinforcement learning as presented below. 

	\section{$\Sph^1$--Equivariant Reinforcement Learning for Quadrotor}

	In this section, we present a symmetry property of the quadrotor and we discuss how it yields an equivariance property to be utilized in enhancing the sampling efficiency of reinforcement learning.
	\begin{figure}[h]
		\begin{picture}(100,100)
		\put(0,0){\includegraphics[scale=0.32]{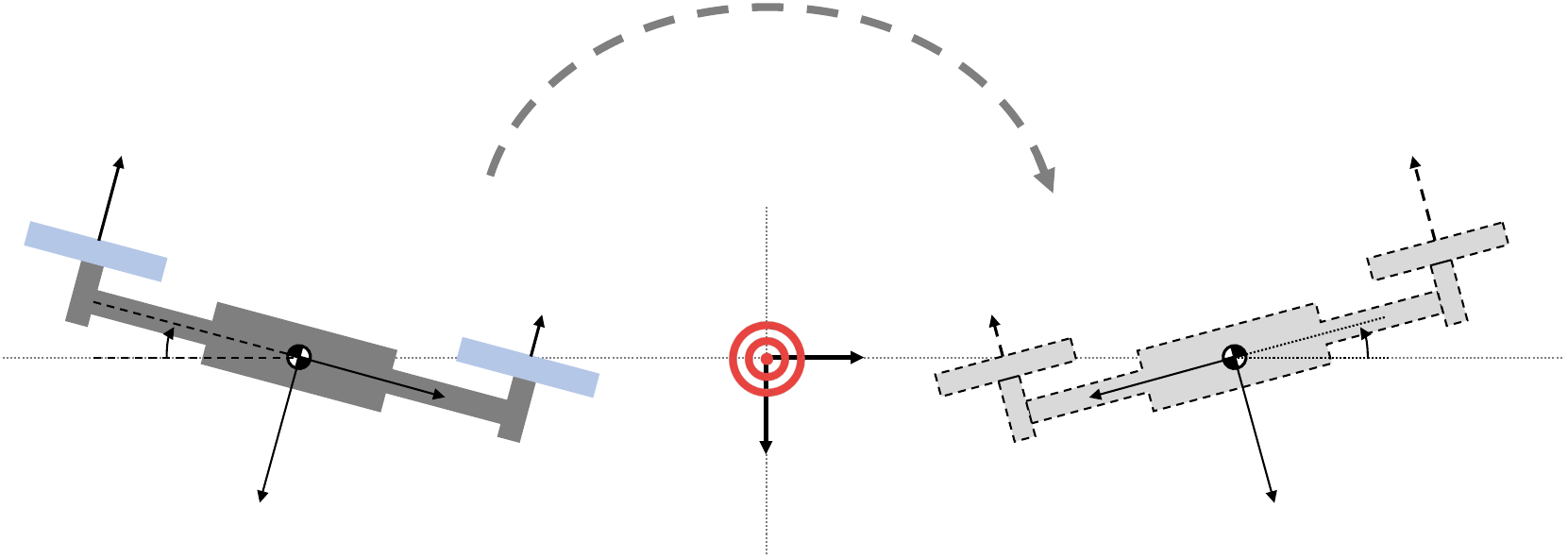}}
		\put(137,25.5){{\scriptsize $\vec e_2$}}
		\put(116,16){{\scriptsize $\vec e_3$}}
		\put(19,34.5){{\scriptsize $\phi$}}
		\put(17,67){{\scriptsize $T_4$}}
		\put(227,68){{\scriptsize $T_4$}}
		\put(155,42){{\scriptsize $T_2$}}
		\put(86,42){{\scriptsize $T_2$}}
		\put(59,49){{\small $(s, a)$}}
		\put(160,49){{\footnotesize $(G_\mathcal{S}(s), G_\mathcal{A}(a))$}}
		\put(69,17){{\scriptsize $\vec b_2$}}
		\put(36,4){{\scriptsize $\vec b_3$}}
		\put(178,17){{\scriptsize $\vec b_2$}}
		\put(210,4){{\scriptsize $\vec b_3$}}
		\end{picture}
		\caption{A control problem of planar quadrotors}
		\label{fig:2D_quad}
	\end{figure}

    To illustrate the key idea, we first consider a planar quadrotor shown in \Cref{fig:2D_quad}, which is confined to the plane spanned by $\vec e_2$ and $\vec e_3$. 
    Here, the state and action of the quadrotor (on the left) are given by $s = (x_2, x_3, v_2, v_3, \phi, \dot{\phi}) \in \mathbb{R}^{6}$ and $a = (T_2, T_4)\in \mathbb{R}^2$, where $\phi$ and $\dot{\phi}$ are the roll angle and the angular velocity, respectively.
    If we flip the quadrotor horizontally to the right side, more precisely by the state transform $G_\mathcal{S}(s) = (-x_2, x_3, -v_2, v_3, -\phi, -\dot\phi)$ and the action transform $G_\mathcal{A} (a) = (T_2,T_4)$,  
    then both systems exhibit the same dynamic characteristics, 
    except that they are on the opposite side. 
    As such, the optimal action on the left can be transformed to the right side, and vice versa, thereby reducing the domain of the state space to be considered for optimization into half. 
    In other words, the above planar quadrotor is symmetric with respect to a horizontal flip, and the symmetry yields certain equivariance properties in the value function and the optimal action, to be exploited for improving sample efficiency.
    This idea is more formally developed as follows.

	\subsection{$\Sph^1$--Symmetry of Quadrotor} 
	
    More generally, we show that the quadrotor dynamics presented in \Cref{sec:PF} is symmetric with respect to the rotation about the vertical axis. 
    Specifically, consider the following group action of $\Sph^1=\{ q \in\Re^2 \,|\, \|q\|=1\}$  parameterized by $\theta\in(-\pi, \pi])$~\footnote{To avoid any confusion with the action $a$ of MDP, this is referred to as \textit{group} action.}:
    \begin{align}
        G_\mathcal{S} (s;\theta) & = (\exp(\theta\hat e_3) x, \exp(\theta\hat e_3) v, \exp(\theta\hat e_3) R, \Omega),\label{eqn:G_S} \\
        G_\mathcal{A} (a) & = a.\label{eqn:G_A}
    \end{align}
    In other words, the group action $g_\theta(s,a) = (G_\mathcal{S}(s;\theta),G_\mathcal{A}(a))$ corresponds to the rotation of the complete system by the angle $\theta$ about $\vec e_3$.
    In \eqref{eqn:G_S} and \eqref{eqn:G_A}, the angular velocity $\Omega$ and the action $a$ seem to be unchanged. 
    However, this is because both $\Omega$ and $a$ are resolved in the body-fixed frame. 
    For example, the angular velocity is rotated from $\omega = R \Omega$ into $\exp(\theta\hat e_3)R\Omega = \exp(\theta\hat e_3) \omega$ by the group action when perceived with respect to the inertial frame. 
    From now on, the group actions on the state and the action are denoted by the same symbol $g_\theta$, i.e., $G_\mathcal{S}(s;\theta) = g_\theta s$ and $G_\mathcal{A}(a) = g_\theta a$.

	\begin{figure}
		\centering
        \begin{picture}(175,185)
            \footnotesize
            \put(0,0){\includegraphics[scale=0.37]{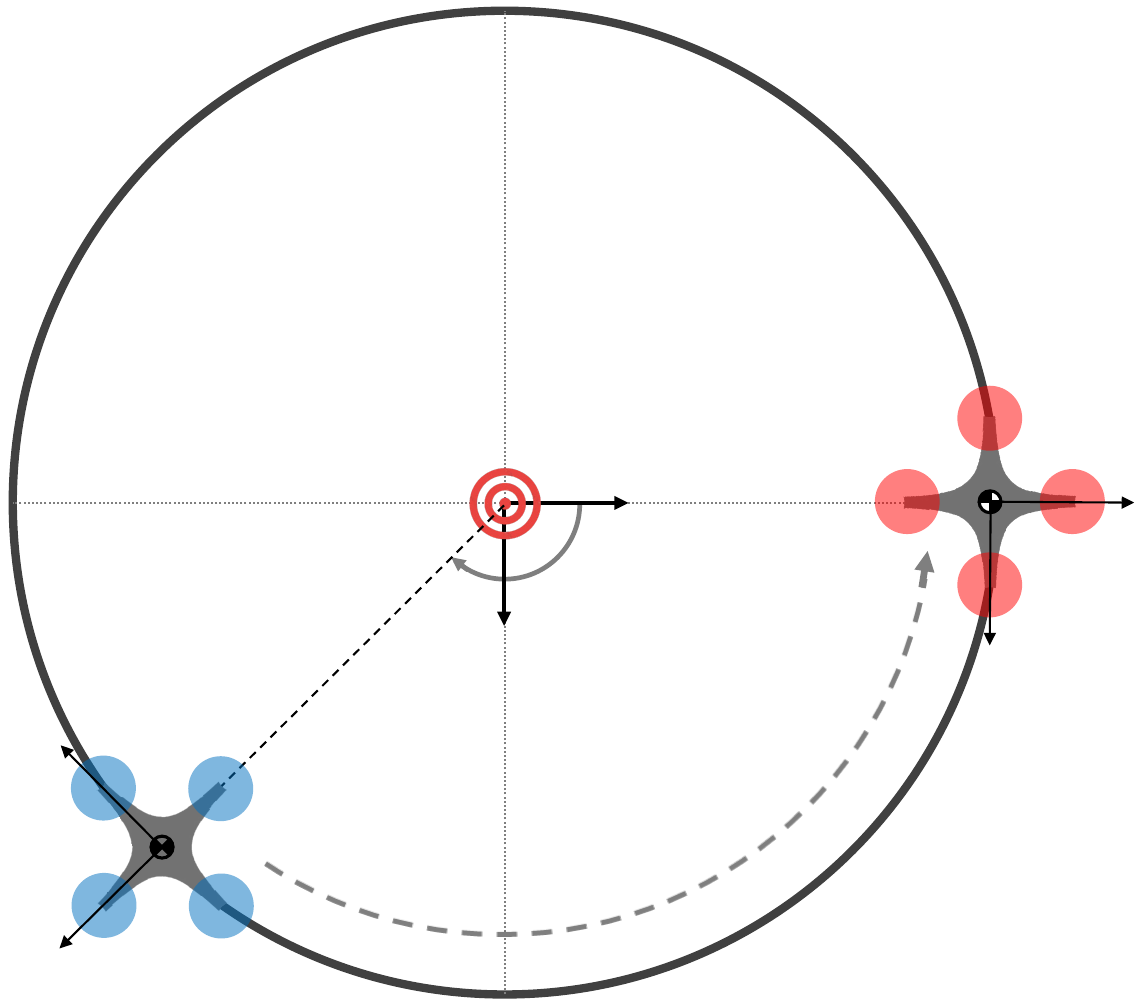}}
            \put(20,5){{$(s, a)$}}
            \put(183,74){{ $(\tilde s, \tilde a)$}}
            \put(95,30){{\shortstack[c]{$g_\theta =$ \\ $(G_\mathcal{S}, G_\mathcal{A})$}}}
			\put(107,92){{$\vec e_1$}}
			\put(80,66){{$\vec e_2$}}
			\put(99,72.5){{$\theta$}}
			\put(71,95){\textit{goal}}
			\put(5,4.5){{$b_1$}}
			\put(2,47){{$b_2$}}
			\put(204,86){{$\tilde{b}_1$}}
			\put(175,54){{$\tilde{b}_2$}}
        \end{picture}
        \caption{Illustration of the group action corresponding to the rotation about $\vec e_3$}
		\label{fig:rot}
	\end{figure}

    Next, we show that the quadrotor dynamics is symmetric with respect to the group action $g_\theta$.
	\begin{prop}\label{prop:sym}
        Let the equations of motion \eqref{eqn:x_dot}--\eqref{eqn:W_dot} be consolidated into
        \begin{align}
            \dot s = F(s,a),\label{eqn:s_dot}
        \end{align}
        for $F:\mathcal{S}\times \mathcal{A}\rightarrow T\mathcal{S}$, where $T\mathcal{S}$ denotes the tangent bundle of the state space.  
        Also, let $(\tilde s, \tilde a) = g_\theta(s,a)$ for the group action given by \eqref{eqn:G_S} and \eqref{eqn:G_A}.
        Then, $(\tilde s, \tilde a)$ also satisfies the equations of motion, i.e., 
        \begin{align}
            \dot{\tilde s} = F(\tilde s, \tilde a).
        \end{align}
        Or equivalently, $F\circ g_\theta = g_\theta \circ F$, considering that the group action is extended to $T\mathcal{S}$.  
        This corresponds to the equivariance of $F$ with respect to $g_\theta$, and it is also stated that the dynamics is symmetric about $g_\theta$.
	\end{prop}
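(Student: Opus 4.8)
The plan is to verify the equivariance blockwise on the four components of the state $s=(x,v,R,\Omega)$, using only two elementary facts about the rotation $Q:=\exp(\theta\hat e_3)\in\SO$: it does not depend on time, and it fixes the gravity direction, $Qe_3=e_3$ (because $\hat e_3 e_3 = e_3\times e_3 = 0$, i.e.\ $e_3$ is an eigenvector of $\exp(\theta\hat e_3)$ with eigenvalue $1$). First I would make the extended group action on $T\mathcal{S}$ explicit: on the Euclidean factors $x$ and $v$ it is left multiplication by $Q$; on the $\SO$ factor the tangent of $R\mapsto QR$ sends $R\hat\eta\mapsto QR\hat\eta$; and on $\Omega$, and on the action $a$, it is the identity, so that $f$ and $M$ — being a fixed linear image of $a$ alone via the mixer matrix — are also unchanged. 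Writing \eqref{eqn:s_dot} out as $F(s,a)=\parenth{v,\ \tfrac1m(mge_3-fRe_3),\ R\hat\Omega,\ J^{-1}(M-\Omega\times J\Omega)}$, and noting $F(s,a)\in T_s\mathcal{S}$ while both $F(g_\theta(s,a))$ and $g_\theta\cdot F(s,a)$ lie in $T_{g_\theta s}\mathcal{S}$, the two sides can be compared directly, component by component.

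Three of the four blocks are immediate. Translational kinematics: $\dot{\tilde x}=Q\dot x=Qv=\tilde v$, since $Q$ is constant. Rotational kinematics: $\dot{\tilde R}=Q\dot R=QR\hat\Omega=\tilde R\,\widehat{\tilde\Omega}$, because $\tilde\Omega=\Omega$. Attitude dynamics: since $\Omega$, $J$, and $M$ are all invariant under $g_\theta$, the equation $J\dot{\tilde\Omega}+\tilde\Omega\times J\tilde\Omega=\tilde M$ is literally the original one. The only block that uses any structure of $Q$ is the translational dynamics: $m\dot{\tilde v}=Q(m\dot v)=Q(mge_3-fRe_3)=mg\,Qe_3-fQRe_3$, which must equal $mge_3-f\tilde Re_3=mge_3-fQRe_3$; the two agree precisely because $Qe_3=e_3$.

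The one step that genuinely carries the argument — the ``main obstacle,'' such as it is — is this gravity term, and it is worth flagging why the symmetry is $\Sph^1$ rather than the full $\SO$: a generic rotation fails to preserve $e_3$, so the gravitational force $mge_3$ would not transform consistently. I would therefore present $Qe_3=e_3$ as an explicit intermediate fact (with the one-line justification above), and likewise state clearly that $G_\mathcal{A}(a)=a$ forces $\tilde f=f$ and $\tilde M=M$ through the linearity of the thrust-to-wrench map. Assembling the four verified blocks yields $F\circ g_\theta=g_\theta\circ F$, hence $\dot{\tilde s}=F(\tilde s,\tilde a)$, which is the assertion; no integration or regularity considerations are needed since everything reduces to left-multiplying smooth curves by the constant matrix $Q$.
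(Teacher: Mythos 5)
Your proof is correct and follows essentially the same route as the paper's: a blockwise verification of the four equations of motion, with the single nontrivial step being $\exp(\theta\hat e_3)e_3=e_3$ to handle the gravity term in the translational dynamics. Your additional remarks (the eigenvector justification, the invariance of $f$ and $M$ via the linear thrust-to-wrench map, and why the symmetry is only $\Sph^1$ rather than $\SO$) are sound elaborations of steps the paper treats as immediate.
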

	
	\begin{proof}
		Let $g_\theta(x,v,R,\Omega ) = (\tilde x, \tilde v, \tilde R, \tilde \Omega)$.
		First, we have
		\begin{align*}
		\dot{\tilde x} = \exp(\theta\hat e_3) v = \tilde v.
		\end{align*}
		Similarly, we have $m \dot{\tilde v} = m\exp(\theta\hat e_3) \dot v$.
		Substituting \eqref{eqn:v_dot} into the above, 
		\begin{align*}
		m \dot{\tilde v} & = \exp(\theta\hat e_3) (mg e_3 - f R e_3) \\
		& = mg e_3 - f\tilde R e_3,
		\end{align*}
		as $\exp(\theta\hat e_3)e_3 = e_3$ for any $\theta$.
		These are equivalent to the translational dynamics \eqref{eqn:x_dot} and \eqref{eqn:v_dot}.
		
        Next, for \eqref{eqn:R_dot},
		\begin{align*}
		\dot{\tilde R} = \exp(\theta\hat e_3)\dot R = \exp(\theta\hat e_3) R \hat\Omega = \tilde{R}\hat{\tilde\Omega}.
		\end{align*}
		Also, as $\tilde\Omega=\Omega$, it trivially satisfies \eqref{eqn:W_dot}.
	\end{proof}

    This implies that when the state-action trajectory $(s(t),a(t))$ for $t\in[0,T]$ is the solution of \eqref{eqn:x_dot}--\eqref{eqn:W_dot} representing the quadrotor dynamics, 
    then the rotated trajectory $g_\theta(s(t),a(t))$ also satisfies \eqref{eqn:x_dot}--\eqref{eqn:W_dot} for any $\theta\in\Sph^1$.
    This is not surprising as the direction of $\vec e_1$ (or $\vec e_2$) in the horizontal plane is completely arbitrary in the formulation of the quadrotor dynamics. 

    This further allows us to define an equivalent class for the state-action trajectories over an interval $[0,T]$. 
    Let 
    \begin{align}
        (s(t),a(t))\sim(\tilde s(t), \tilde a(t)) 
    \end{align}
    if there exists $\theta\in\Sph^1$ such that $(\tilde s(t),\tilde a(t)) = g_\theta(s(t), a(t))$ for all $t\in[0,T]$. 
    It is straightforward to show the reflectivity, symmetry, and transitivity of the above binary relation, to verify that it is an equivalent relation~\cite{kelley2017general}. 
    For $(s,a)\in\mathcal{S}\times\mathcal{A}$, define its equivalent class as
    \begin{align}
        [s,a] = \{ (\tilde s, \tilde a) \in\mathcal{S}\times \mathcal{A} \,| \,(s,a)\sim(\tilde s, \tilde a)\}.
    \end{align}
    Then, the quadrotor dynamics can be characterized completely on the quotient space of $\mathcal{S}\times\mathcal{A}$ by the equivalence relation $\sim$, denoted by $\mathcal{S} \times \mathcal{A} /$ $\sim$.

	\subsection{Equivariant Reinforcement Learning}

    Next, we show how the symmetry properties can be exploited in the reinforcement learning. 
    It has been shown that if the reward also satisfies the symmetry property, the corresponding value function is invariant and the optimal policy is equivariant in a discrete-time setting~\cite{wang2022mathrm}.
    Here we establish the correspondent results for deterministic, continuous-time dynamics by extending the continuous reinforcement learning formulated in~\cite{doya2000reinforcement,munos2000study}.
    This is to formulate an equivariant reinforcement learning framework for the inherent, continuous-time quadrotor dynamics, without resorting to any discretization scheme.

    For $0<\gamma<1$, let the value function of a policy $\pi:\mathcal{S}\rightarrow\mathcal{A}$ be defined by
    \begin{align}
        V_\pi (t,s(t)) = \int_t^\infty \gamma^{\tau} r(s(\tau),a(\tau)) d\tau,\label{eqn:V}
    \end{align}
    where the action at any time is defined by the policy as $a(t)=\pi(s(t))$. 
    The objective is to construct the optimal policy $\pi^*(s(t))$ maximizing $V_\pi(t,s(t))$.
    When the dynamics and the reward are symmetric with respect to a group action as presented in \Cref{prop:sym}, the value function and the optimal policy satisfy the following properties. 

    \begin{prop}
        Consider a continuous-time MDP to maximize the value function \eqref{eqn:V} under the dynamics \eqref{eqn:s_dot}.
        Suppose that $F:\mathcal{S}\times\mathcal{A}\rightarrow T\mathcal{S}$ is equivariant and $r:\mathcal{S}\times\mathcal{A}\rightarrow\Re$ is invariant with respect to a group action $g:\mathcal{S}\times\mathcal{A}\rightarrow\mathcal{S}\times\mathcal{A}$, i.e., 
        \begin{gather}
            F\circ g = g\circ F, \label{eqn:F_eqv} \\
            r\circ g = r. \label{eqn:r_inv}
        \end{gather}
        For a given policy $\pi(s):\mathcal{S}\rightarrow\mathcal{A}$, define a new policy induced by $g$ as
        \begin{align}
            \tilde \pi (\tilde s) = g \pi(g^{-1}\tilde s).
        \end{align}
        Then, the following properties hold:
        \renewcommand{\theenumi}{(\roman{enumi}}
        \begin{enumerate}
            \item The value function is invariant under the group action, i.e., $V_\pi = V_{\tilde\pi}\circ g$. 
            \item The optimal policy is equivariant under the group action, i.e., $\pi^* \circ g = g\circ \pi^*$. 
        \end{enumerate}
    \end{prop}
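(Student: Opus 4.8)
The plan is to push an entire closed-loop trajectory through the group action and then read off both claims from the equivariance \eqref{eqn:F_eqv} and the invariance \eqref{eqn:r_inv}. The one step that does all the work is a \emph{trajectory correspondence}: if $s(\cdot)$ solves $\dot s = F(s,\pi(s))$ with $s(t_0)=s_0$, then $\tilde s(\cdot):=g\,s(\cdot)$ solves $\dot{\tilde s}=F(\tilde s,\tilde\pi(\tilde s))$ with $\tilde s(t_0)=g s_0$. Indeed, with $g$ extended to $T\mathcal S$ as in \Cref{prop:sym} (its pushforward), the chain rule gives $\dot{\tilde s}=g\,\dot s = g\,F(s,\pi(s))$; by \eqref{eqn:F_eqv} this is $F(g s, g\pi(s))$, and since $g\pi(s)=g\pi(g^{-1}\tilde s)=\tilde\pi(\tilde s)$ by definition of $\tilde\pi$, it equals $F(\tilde s,\tilde\pi(\tilde s))$. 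Assuming the closed-loop vector field is Lipschitz, solutions are unique, so $\tilde s(\cdot)$ is exactly the trajectory generated by $\tilde\pi$ from $g s_0$.

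For (i) I would then evaluate the value function \eqref{eqn:V} along the paired trajectories and apply the invariance \eqref{eqn:r_inv}:
\begin{align*}
V_{\tilde\pi}(t, g s) &= \int_t^\infty \gamma^\tau\, r\bigl(\tilde s(\tau),\tilde\pi(\tilde s(\tau))\bigr)\,d\tau = \int_t^\infty \gamma^\tau\, r\bigl(g s(\tau), g\pi(s(\tau))\bigr)\,d\tau \\
&= \int_t^\infty \gamma^\tau\, r\bigl(s(\tau),\pi(s(\tau))\bigr)\,d\tau = V_\pi(t, s),
\end{align*}
for every $s\in\mathcal S$, the improper integrals converging because $r$ is bounded on the relevant domain and $0<\gamma<1$. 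This is precisely $V_\pi = V_{\tilde\pi}\circ g$.

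For (ii) I would first observe that $\pi\mapsto\tilde\pi$ is a bijection on the set of policies (its inverse is the map induced by $g^{-1}$). Taking the supremum over all policies in (i) then gives $V^*(t, g s) = \sup_\pi V_{\tilde\pi}(t, g s) = \sup_\pi V_\pi(t, s) = V^*(t, s)$, so the optimal value function is invariant. Now if $\pi^*$ is optimal, (i) yields $V_{\widetilde{\pi^*}}\circ g = V_{\pi^*} = V^* = V^*\circ g$, so $\widetilde{\pi^*}$ is also optimal; when the optimal policy is unique this forces $\widetilde{\pi^*}=\pi^*$, i.e.\ $g\,\pi^*(g^{-1}\tilde s)=\pi^*(\tilde s)$ for all $\tilde s$, which is exactly $\pi^*\circ g = g\circ\pi^*$. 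Without a uniqueness hypothesis, the same conclusion follows by writing the HJB optimality condition for $V^*$ and noting that, thanks to the invariance of $V^*$ and $r$ and the equivariance of $F$, the set of pointwise maximizers over $a$ is carried to itself by $g$.

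I expect the genuine obstacle to be continuous-time bookkeeping rather than the algebra: ensuring the value integral is well defined, justifying uniqueness of closed-loop solutions so the transformed path really is the $\tilde\pi$-trajectory from $g s_0$, and interpreting the action of $g$ on $T\mathcal S$ so that $\tfrac{d}{dt}(g s(t)) = g\,\dot s(t)$ and \eqref{eqn:F_eqv} apply verbatim. The uniqueness-of-optimal-policy issue in (ii) is the other place where a standing assumption (or the HJB detour) must be invoked.
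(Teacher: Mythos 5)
Your proof is correct, and part (i) is essentially identical to the paper's: both push the closed-loop trajectory through $g$, identify it as the $\tilde\pi$-trajectory from $gs_0$, and apply the invariance of $r$ under the integral. For part (ii), however, you take a genuinely different main route. The paper derives the continuous-time Hamilton--Jacobi--Bellman equation (by splitting the integral, Taylor-expanding, and computing $\partial V/\partial t=\log\gamma\,V$), writes $\pi^*(s)$ as the pointwise maximizer of $r(s,a)+\frac{\partial V^*}{\partial s}F(s,a)$, and then shows via \eqref{eqn:F_eqv}, \eqref{eqn:r_inv}, and the chain rule $\frac{\partial V^*}{\partial\tilde s}=\frac{\partial V^*}{\partial s}g^{-1}$ that the maximization at $\tilde s=gs$ is the same problem with $a$ replaced by $g^{-1}\tilde a$, so the maximizer transforms as $\tilde a=ga^*$. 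Your argument instead stays at the level of the policy space: $\pi\mapsto\tilde\pi$ is a bijection, so taking the supremum in (i) gives $V^*=V^*\circ g$, hence $\widetilde{\pi^*}$ is optimal whenever $\pi^*$ is, and uniqueness of the optimal policy closes the argument. Your route is more elementary and avoids assuming differentiability of $V^*$, which the HJB derivation needs; it also makes explicit the step $V^*=V^*\circ g$, which the paper invokes somewhat tersely when it ``utilizes property (i)'' for $V^*$ rather than for a fixed policy. The price is the explicit uniqueness (or selection) hypothesis on $\pi^*$ --- though the paper's argument carries the same assumption implicitly when it speaks of ``the'' optimal action $a^*=\pi^*(s)$, so neither version is strictly more rigorous on that point; your fallback to the HJB maximizer-set argument is exactly what the paper does.
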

    \begin{proof}
        Let $(s(t),a(t))$ be a trajectory of \eqref{eqn:s_dot}, driven by a policy $\pi(s)$. 
        Then, according to \Cref{prop:sym}, $(\tilde s(t), \tilde a(t))=(gs(t), ga(t))$ is another trajectory of \eqref{eqn:s_dot}, where
        \begin{align*}
            \tilde a = g a = g \pi(s) = g \pi(g^{-1}\tilde s) = \tilde \pi(\tilde s).
        \end{align*}
        Thus, $(\tilde s(t), \tilde a(t))$ is a trajectory of \eqref{eqn:s_dot} from the transformed policy $\tilde \pi$. 
        We have
        \begin{align*}
            V_{\tilde \pi}(t, g s(t)) & = \int_t^\infty \gamma^\tau r(\tilde s(\tau),\tilde a(\tau)) d\tau \\
                                   & = \int_t^\infty \gamma^\tau r(g s(\tau),g a(\tau)) d\tau,
        \end{align*}
        which reduces to \eqref{eqn:V} from \eqref{eqn:r_inv}.
        This shows (i).

        Next, split the domain of the integration in \eqref{eqn:V} into two parts such that
    \begin{align*}
        V_\pi(t,s(t)) & = \int_{t}^{t+\Delta t} \gamma^{\tau} r(s(\tau),a(\tau)) d\tau \\
          &\quad + \int_{t+\Delta t}^{\infty} \gamma^{\tau} r(s(\tau),a(\tau)) d\tau\\
          & = r(s(t),a(t))\Delta t + V_\pi(t+\Delta t, s(t+\Delta t)) + o(\Delta t).
    \end{align*}
    Here, $V_\pi(t+\Delta t, s(t+\Delta t))$ is expanded into
    \begin{align*}
        V_\pi(t,s(t)) + \deriv{V_\pi}{t}\Delta t + \deriv{V_\pi}{s}F(s(t),a(t))\Delta t + o(\Delta t).
    \end{align*}
    Substituting this into the above and rearranging
    \begin{align}
        -\deriv{V_\pi}{t} = r(s(t),a(t)) + \deriv{V_\pi}{s}F(s(t),a(t)).\label{eqn:HJB0}
    \end{align}
    as $\Delta t\rightarrow 0$.
    The left-hand side is rewritten as follows.
    The value function can be reorganized into
    \begin{align*}
        V_\pi(t,s(t)) = \gamma^t \int_t^\infty \gamma^{\tau-t} r(s(\tau),a(\tau)) d\tau,
    \end{align*}
    where the integral at the second factor is independent of $t$. 
    Thus, its derivative with respect to $t$ is given by
    \begin{align*}
        \deriv{V_\pi}{t} & =  \gamma^t \log\gamma \int_t^\infty \gamma^{\tau-t} r(s(\tau), a(\tau)) d\tau\\
        & = \log\gamma V_\pi(t,s(t)).
    \end{align*}
    Substituting this back to \eqref{eqn:HJB0}, 
    \begin{align*}
        \log\gamma^{-1} V_\pi(t,s(t)) = r(s(t),a(t)) + \deriv{V_\pi(t,s(t))}{s}F(s(t), a(t)).
    \end{align*}

    Let $V^*$ be the optimal value function, obtained by the optimal policy $\pi^*$.
    The above yields the following Hamilton-Jacobi-Bellman equation:
    \begin{align*}
        \log\gamma^{-1} V^*(t,s(t)) = \max_{a\in\mathcal{A}} \left\{ r(s,a) + \deriv{V^*(t,s(t))}{s}F(s, a) \right\},
    \end{align*}
    and the optimal policy is given by
    \begin{align}
        \pi^*(s(t)) = \arg \max_{a\in\mathcal{A}} \left\{ r(s,a) + \deriv{V^*(t,s(t))}{s}F(s, a) \right\}.\label{eqn:pi_opt}
    \end{align}

    Therefore, the optimal action at $\tilde s = gs$ is given by
    \begin{align*}
        (\pi^*\circ g) (s) = \pi^*(\tilde s) = \arg \max_{\tilde a\in\mathcal{A}} \left\{ r(\tilde s,\tilde a) + \deriv{V^*(t,\tilde s)}{\tilde s}F(\tilde s, \tilde a) \right\}.
    \end{align*}
    From \eqref{eqn:F_eqv} and \eqref{eqn:r_inv}, we have $F(\tilde s,\tilde a) = g F(s, g^{-1}\tilde a)$ and $r(\tilde s, \tilde a) = r(s, g^{-1}\tilde a)$. 
    Further, utilizing the property (i),
    \begin{align*}
        \deriv{V^*(t,\tilde s(t))}{\tilde s} = 
        \deriv{V^*(t,s(t))}{s}\deriv{s}{\tilde s} = \deriv{V^*(t,s(t))}{s}g^{-1}
    \end{align*}
    Substituting these,
    \begin{align}
        \pi^*(\tilde s(t)) = \arg \max_{\tilde a\in\mathcal{A}} \left\{ r(s,g^{-1}\tilde a) + \deriv{V^*(t,s(t))}{s}F(s, g^{-1}\tilde a) \right\},\label{eqn:pis_opt}
    \end{align}
    which is equivalent to \eqref{eqn:pi_opt}, except that $a$ in \eqref{eqn:pi_opt} is replaced by $g^{-1}\tilde a$.
    As such, if $a^*(t)=\pi^*(s(t))$ is the optimal action of \eqref{eqn:pi_opt}, then the right hand side of \eqref{eqn:pis_opt} is maximized when $g^{-1}\tilde a = a^*$, or equivalently $\tilde a = g a^*$. 
    This shows that $\pi^* (gs) = g\pi^*(s)$. 
\end{proof}

    Equivariant reinforcement learning is to utilize the invariance of the value function and the equivariant of the optimal action with respect to the symmetry of the dynamics and the reward. 
    For the quadrotor dynamics, the symmetry of the dynamics \eqref{eqn:F_eqv} is established by \Cref{prop:sym}, and it is straightforward to show the invariance of the reward \eqref{eqn:r_inv} as the operation of taking norm of a vector in \eqref{eqn:reward} is not affected by any rotation.
    Further, as the group action $g_\theta$ on the action $a$, given by \eqref{eqn:G_A} is the identity map, the optimal action is in fact invariant as well. 
    In short, for the quadrotor dynamics, we have
    \begin{align}
        V(s) = V(\tilde s),\quad \pi^*(s) = \pi^*(\tilde s),\label{eqn:quadRL_eqv}
    \end{align}
    for any $\tilde s\in[s]$. 

    In deep reinforcement learning, the value function and the policy are represented by deep neural networks, which are trained by a set of trajectories.
    As such, successful implementation of reinforcement learning often requires a massive number of trajectories. 
    Here, the sampling efficiency of reinforcement learning can be substantially improved by exploiting \eqref{eqn:quadRL_eqv} as presented below.

    \section{Numerical Experiments}\label{sec:NE}
	This section presents a neural network structure that respects the above equivariance property, to be utilized in the proposed equivariant reinforcement learning. 
    Then, we show the detailed implementation and simulation environments for benchmark studies. 
	
	\subsection{Neural Network Structures}
	
    In reinforcement learning, neural networks that approximate the optimal policy and the value function are referred to as \textit{actor} and \textit{critic} networks, respectively. 
    To impose the invariant properties \eqref{eqn:quadRL_eqv} in the actor and the critic, one can develop group equivariant neural networks with respect to the group action of \eqref{eqn:G_S} and \eqref{eqn:G_A}. 
    However, the most of the existing results in equivariant neural networks deal with inputs of images or discrete actions, and they are not suitable for the presented rotational action on vectors and matrices. 

    Instead, from \eqref{eqn:quadRL_eqv}, the approximation of the value function and the policy is performed on the quotient space $\mathcal{S}\times\mathcal{A}/\sim$. 
    For the equivalent class $[s]$ of any $s\in\mathcal{S}$, we choose a representative element $\tilde s$ by
    \begin{align}
        [s] =  g_{[\theta]} s,
    \end{align}
where $[\theta] = -\mathrm{atan2}(x_2, x_1)$.
    Using this, the equivalent class can be identified with the above representative element, such that the quotient space is considered as a set of the representative elements.
    Thus, 
    \begin{align}
        V(s) = V([s]),\quad \pi^*(s)=\pi^*([s]).\label{eqn:quadRL_eqv_rep}
    \end{align}
    As the group action is one-dimensional, this reduces the domain of the value and the policy by one. 
    The particular choice of the rotation angle $[\theta]$ ensures that the second element of the position is always zero, i.e., $[x]_2=0$ (see \Cref{fig:rot}).
    As such, it can be simply dropped from the input of the actor and the critic. 
    Specifically, the input to the each network is $([x]_1, [x]_3, [v], [R], [\Omega]) \in \mathbb{R}^{17}$.
	
	\begin{figure}
		\centering
		\begin{tabular}{ c }
			\begin{picture}(160,175)
				\footnotesize
				\put(0,0){\includegraphics[scale=0.4]{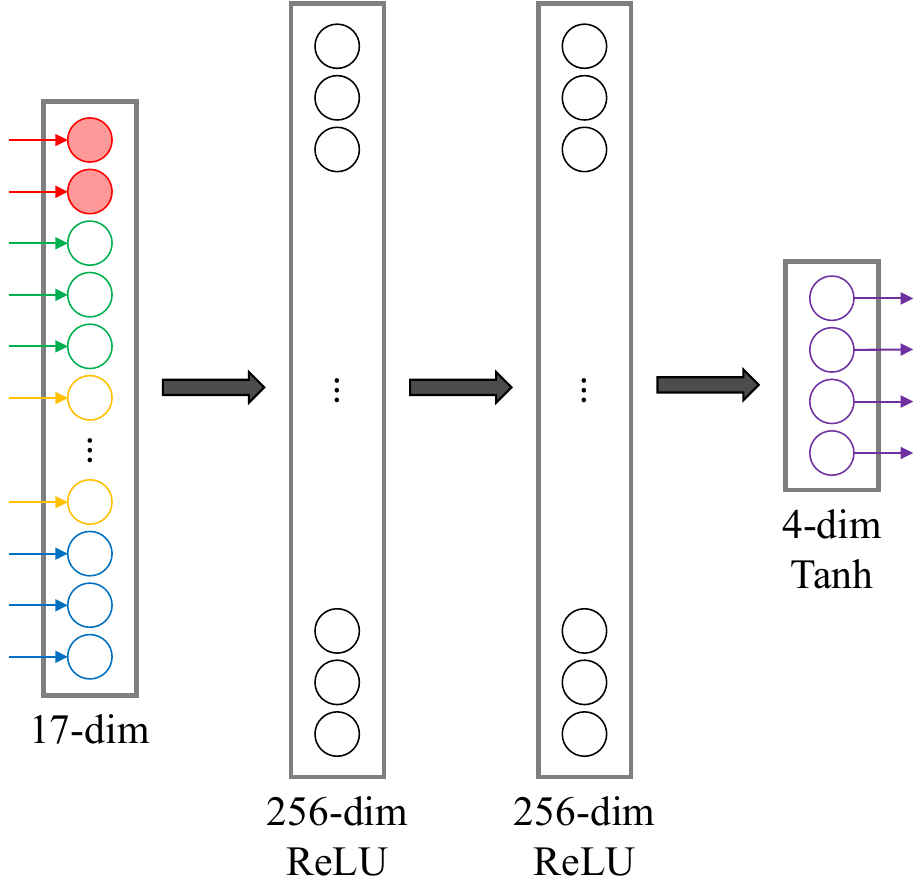}}
				\put(-14,142){{$[x]_1$}}
				\put(-14,132){{$[x]_3$}}
				\put(-14,122){{$[v]_1$}}
				\put(-14,112){{$[v]_2$}}
				\put(-14,102){{$[v]_3$}}
				\put(-17,92){{$[R]_{11}$}}
				\put(-17,72){{$[R]_{33}$}}
				\put(-14,62){{$[\Omega]_1$}}
				\put(-14,52){{$[\Omega]_2$}}
				\put(-14,42){{$[\Omega]_3$}}
				
				\put(178,110){{$T_1$}}
				\put(178,100){{$T_2$}}
				\put(178,90){{$T_3$}}
				\put(178,80){{$T_4$}}
			\end{picture}\\
			\small (a) Actor \\
			\\[0.5pt]
			\begin{picture}(160,200)
				\footnotesize
				\put(0,0){\includegraphics[scale=0.4]{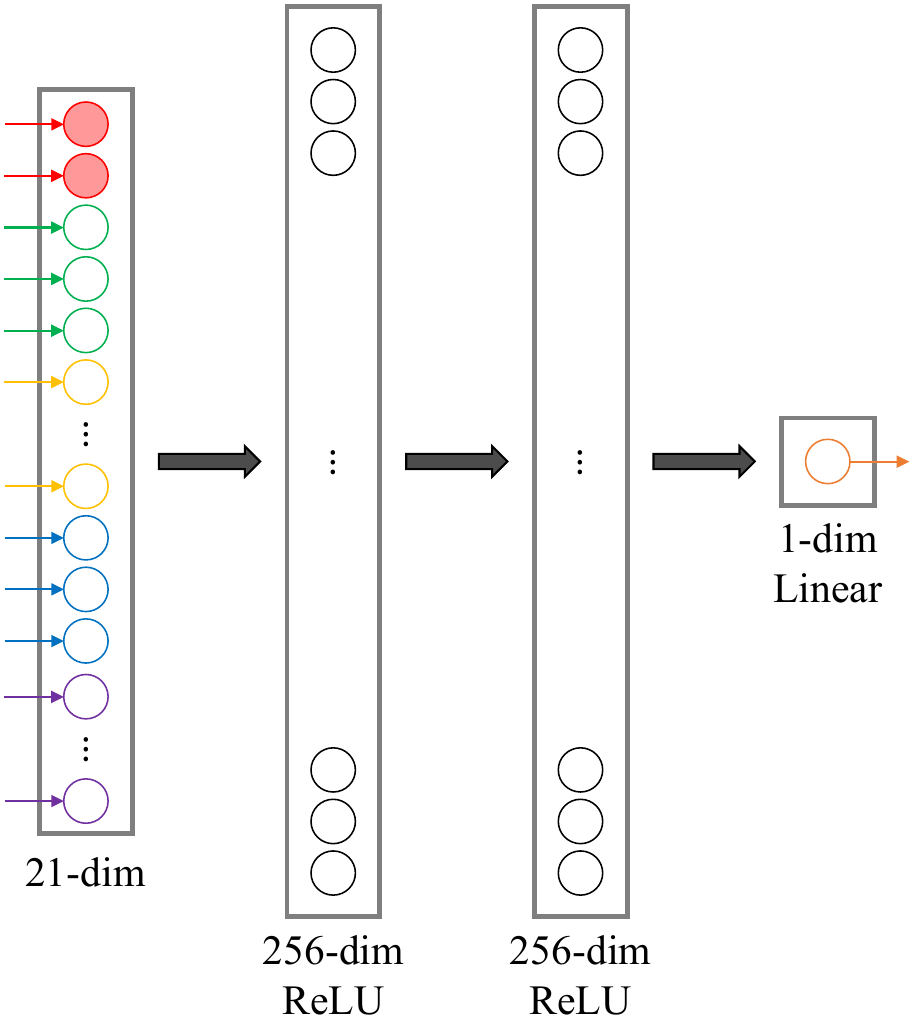}}
				\put(-14,172){{$[x]_1$}}
				\put(-14,162){{$[x]_3$}}
				\put(-14,152){{$[v]_1$}}
				\put(-14,142){{$[v]_2$}}
				\put(-14,132){{$[v]_3$}}
				\put(-17,122){{$[R]_{11}$}}
				\put(-17,102){{$[R]_{33}$}}
				\put(-14,92){{$[\Omega]_1$}}
				\put(-14,82){{$[\Omega]_2$}}
				\put(-14,72){{$[\Omega]_3$}}
				\put(-10,60){{$T_1$}}
				\put(-10,40){{$T_4$}}
				
				\put(178,105){{$Q([s], a)$}}
			\end{picture}\\
			\small (b) Critic
		\end{tabular}
		\caption{The architecture of neural networks: (a) Actor and (b) Critic}
		\label{fig:NN}
	\end{figure}

    Note that \eqref{eqn:quadRL_eqv_rep} can be adopted to any reinforcement learning scheme.
    In this paper, we apply it to TD3 and SAC, to obtain two equivariant reinforcement learning schemes, referred to as equivariant TD3 and equivariant SAC, respectively.  
    As they are developed inherently for a discrete-time MDP, the quadrotor equations of motion are discretized by a Runge-Kutta method. 
    Reinforcement learning for a discrete-time MDP often relies on the state-action value function, $Q(s,a)$. 
    As a linear combination of the reward $r(s,a)$ and the value $V(s)$, the state-action value function is also group invariant~\cite{wang2022mathrm}, i.e., $Q\circ g = Q$. 
    Therefore, $Q(s,a) = Q([s],a)$ for the presented quadrotor dynamics. 
	
	The structure of the actor network in this paper is illustrated in \Cref{fig:NN}(a), where the actor maps the reduced quadrotor state to its motor commands. 
	TD3 and SAC share the same inputs and outputs consisting of a 17-dimensional state vector and a 4-dimensional action vector, respectively.
	One major difference is that TD3 is based on the \textit{deterministic} policy actor, whereas the \textit{stochastic} policy actor is updated in SAC.
	A hyperbolic tangent activation function in the output layer ensures a proper range of control signals.
	The critic networks are depicted in \Cref{fig:NN}(b).
    TD3 and SAC take both the state and action as inputs and estimate the action-value function $Q([s],a)$ as outputs.
	All networks are built as multilayer perceptron (MLP) networks with two hidden layers of 256 nodes with \textit{ReLU} activation function.

%

	\subsection{Simulation Environments}
	As shown in \Cref{fig:env}, the simulation environment is developed in Python based on the quadrotor dynamics in Section \ref{sec:DM} with the open-source library OpenAI Gym \cite{brockman2016openai}, which is a popular RL algorithm development toolkit.
	TD3 and SAC algorithms are implemented with a machine learning framework PyTorch \cite{paszke2019pytorch}.
	The training and evaluation were executed for 1.5 million time steps with
	the use of NVIDIA CUDA, on a GPU workstation powered by NVIDIA A100-PCIE-40GB.
	Table \ref{tab:hyper} presents hyperparameters used for training TD3, SAC, and Equivariant RL agents.
	TD3 and SAC share some default parameters, such as learning rate and discount factor, unless explicitly indicated in the table.

	\begin{figure}
		\centering
		\includegraphics[scale=0.245]{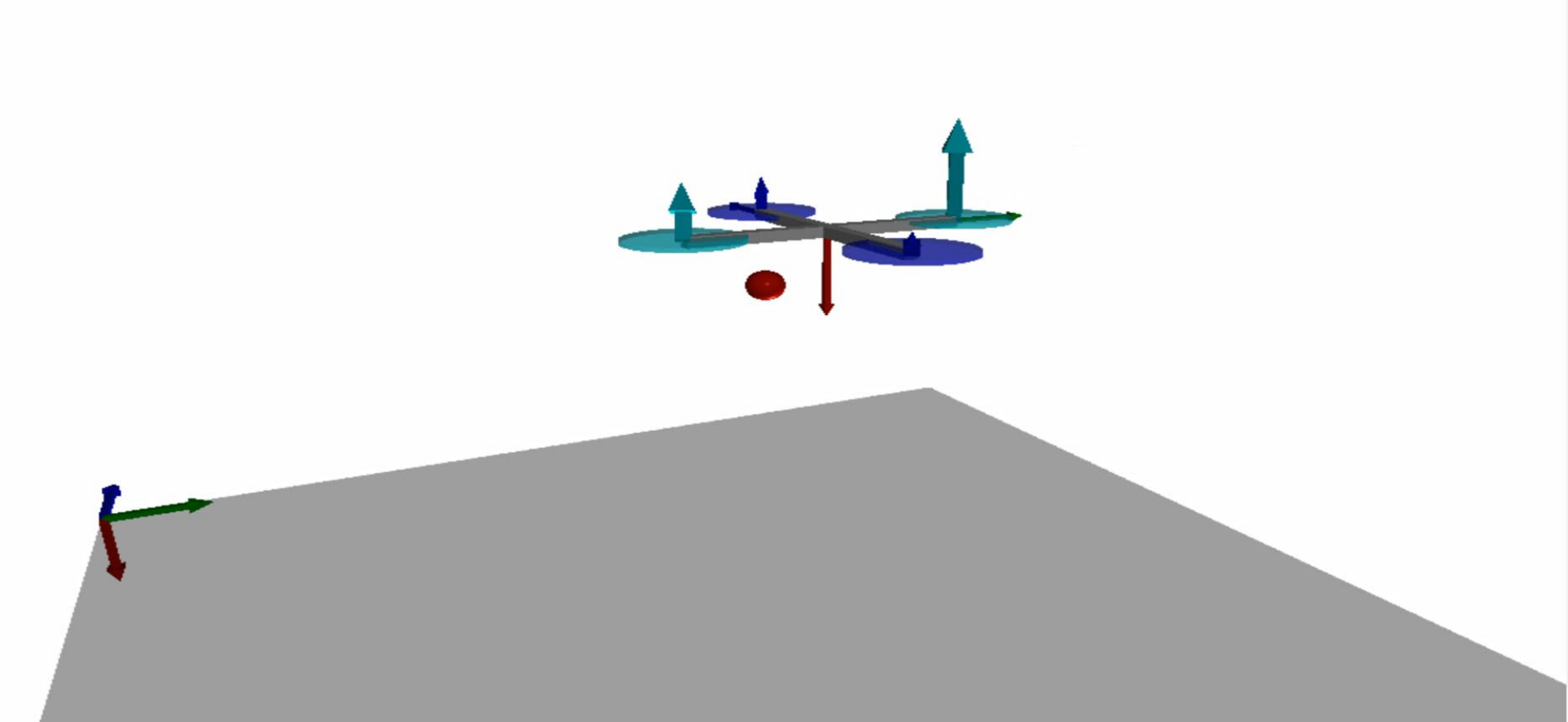}
		\caption{Simulation environment implemented in Python}
		\label{fig:env}
	\end{figure}

	\begin{table}[ht]
		\centering
        \caption{Hyperparameters used in benchmark studies}
		\label{tab:hyper}
		\begin{tabular}[t]{lc}
			\toprule
			Parameter & Value\\
			\midrule
			Optimizer & Adam \\
			Actor learning rate & $3\cdot10^{-4}$ \\
			Critic learning rate & $3\cdot10^{-4}$ \\
			Discount factor, $\gamma$ & 0.99 \\
			Replay buffer size & $10^6$ \\
			Batch size & 256 \\
			Target smoothing coefficient, $\tau$ & 0.005 \\
			\midrule
			Twin Delayed DDPG (TD3) & \\
			~ Exploration noise & 0.1 \\
			~ Target policy noise & 0.2 \\
			~ Policy noise clip & 0.5 \\
			~ Target update interval & 2 \\
			\midrule
			Soft Actor-Critic (SAC) & \\
			~ Entropy regularization coefficient, $\alpha$ & Autotuned \\
			~ Target update interval & 1 \\
			\bottomrule
		\end{tabular}
	\end{table}%


	\subsection{Benchmark Results}
	
	We validated the proposed approach comprehensively through numerical simulations to ensure reliability and stability.
	We took the vanilla TD3 and SAC algorithms, which do not consider the equivariance property,  as the baselines for performance comparison.
	All experiments were executed for 6 random seeds, and the average return was reported once every 5,000 steps without action noise for 30 trajectories.
	During training and evaluation, the quadrotor started with the random states of each episode at an arbitrary initial location in a 3m $\times$ 3m $\times$ 3m space.
	The agents were trained without any auxiliary aid technique such as PID controllers or pre-trained policies.
	
	In \eqref{eqn:reward}, the coefficient $c_x$ is set to 2.0, and the penalizing terms are set $c_v = 0.15$, $c_\Omega = 0.2$, and $c_a = 0.03$ to improve stability and achieve smooth control.
	Note that too large penalties prevent the quadrotor from moving toward its target by focusing only on stabilization.
	The reward is normalized into $[0, 1]$ and rescaled by a factor of 0.1 to ensure convergence.
	Finally, the discount factor is selected as $\gamma = 0.99$.
	
    \begin{figure}
		\centering
		\subfigure[TD3 and Equivariant TD3]{
			\includegraphics[width=0.95\columnwidth]{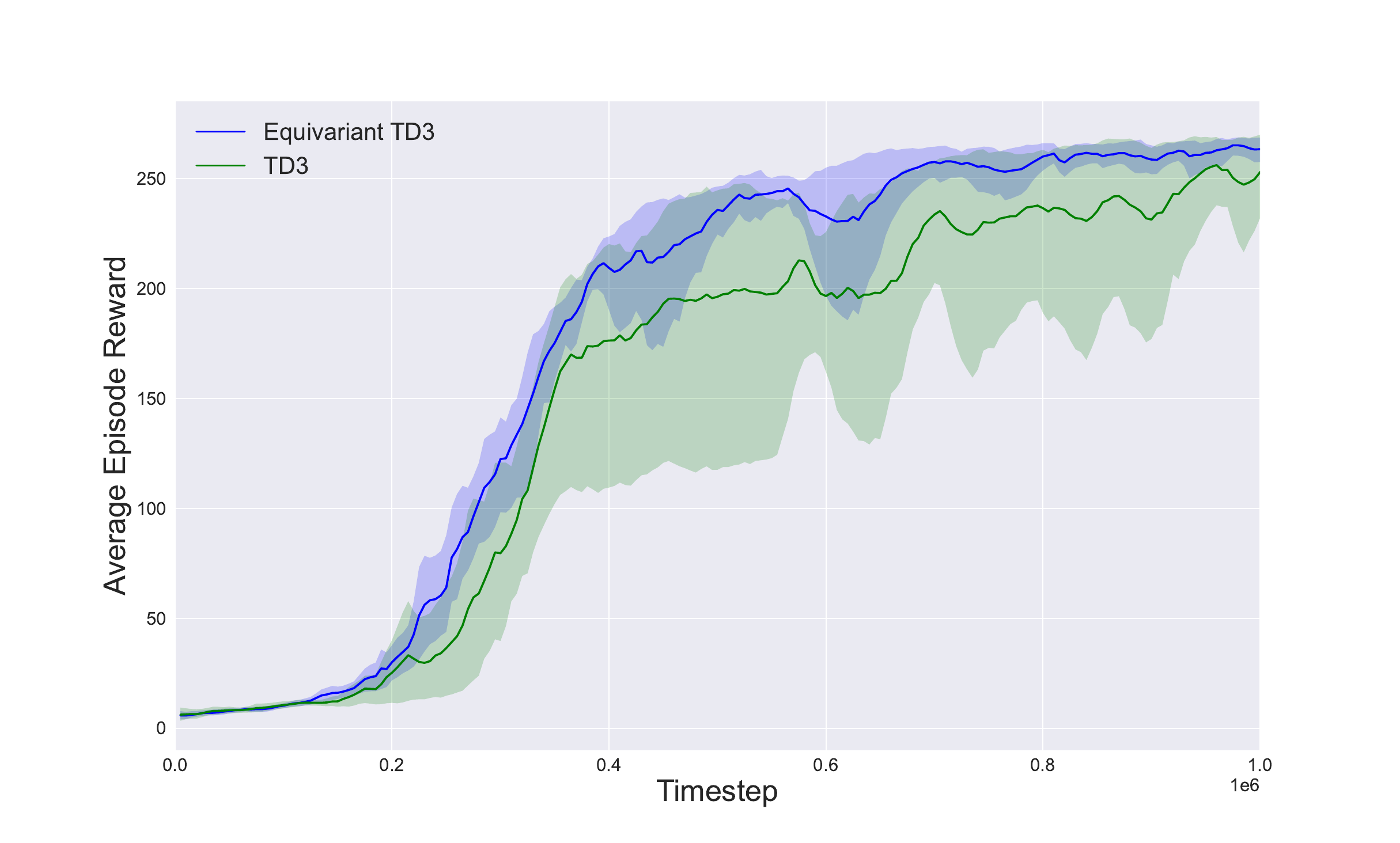} \label{fig:BM_TD3}}
		\\[10pt]
		\subfigure[SAC and Equivariant SAC]{
			\includegraphics[width=0.95\columnwidth]{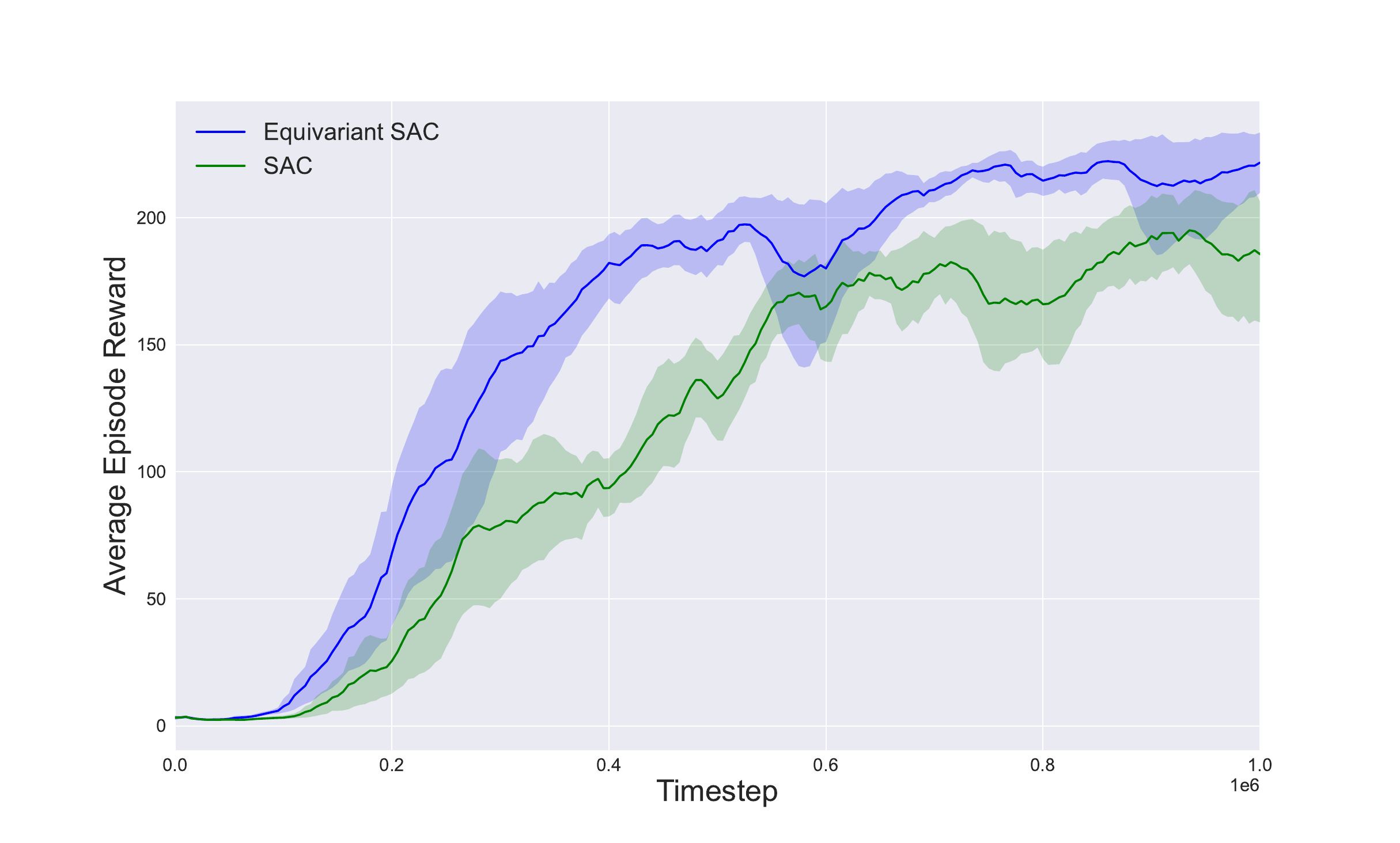} \label{fig:BM_SAC}}
		\caption{Comparison results with respect to the average reward in evaluation: (a) TD3 and (b) SAC}
		\label{fig:BM}
	\end{figure}

	\Cref{fig:BM} shows training curves in terms of the average reward, where the blue curves denote the proposed equivariant methods and the green corresponds to the baselines.
	The solid lines and shaded areas represent the mean value and $2\sigma$ bounds, respectively.
    \Cref{fig:BM_TD3} shows the result of training with TD3 and \Cref{fig:BM_SAC} shows the results of SAC.
	In both cases, the proposed equivariant methods outperform the baselines in terms of the learning speed and convergence.
    In \Cref{fig:BM_TD3}, it is also illustrated that the equivariant TD3 exhibits more consistent results than the vanilla TD3 with respect to the random seed variations. 

	Additionally, the equivariant methods exhibit higher rewards than their counterparts over the same time period.
	In other words, the baselines require more accumulated timesteps to achieve a similar level of reward.
	This confirms that our proposed equivariant framework is more sample-efficient, leading to faster convergence.
		
	Next, to demonstrate the flight performance of the trained policy, the state trajectory and the control inputs for a specific episode are presented in \Cref{fig:traj}, where the top four subfigures show the position error $e_x$, velocity error $e_v$, angular velocity error $e_\Omega$, and motor thrusts $T$. 
    The last subfigure presents the nine elements of attitude $R$.
	As shown in \Cref{fig:traj_ex}, the learned policy successfully controlled the quadrotor from a random position to its target after 6 seconds without any noticeable steady-state error.
    Specifically, the terminal position was $(-0.0028, 0.0017, -0.0012)\mathrm{m}$.
    From \Cref{fig:traj_T}, the thrust commands are reasonable, and they avoid rapid chattering that often appears in RL-based quadrotor controls.
	Thus, the simulation results clearly show that the proposed equivariant framework is not only efficient enough to handle high-dimensional data, but also exhibits desirable properties.

	\begin{figure}[htb]
		\centering
		\begin{tabular}{cc}
			\subfigure[Position error $e_x$]{\includegraphics[width=0.47\columnwidth]{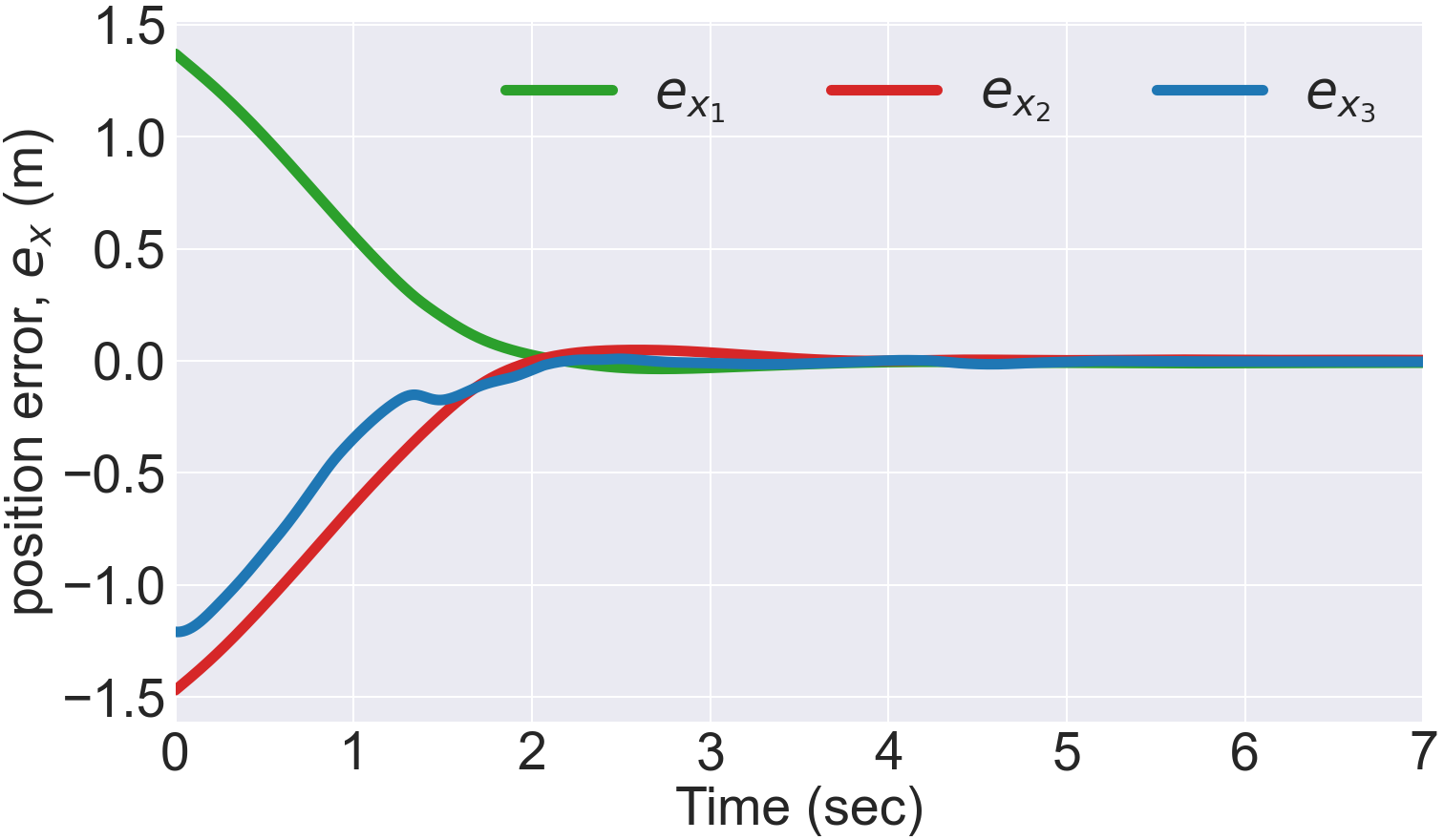} \label{fig:traj_ex}} 
            & \subfigure[Velocity error $e_v$]{\includegraphics[width=0.47\columnwidth]{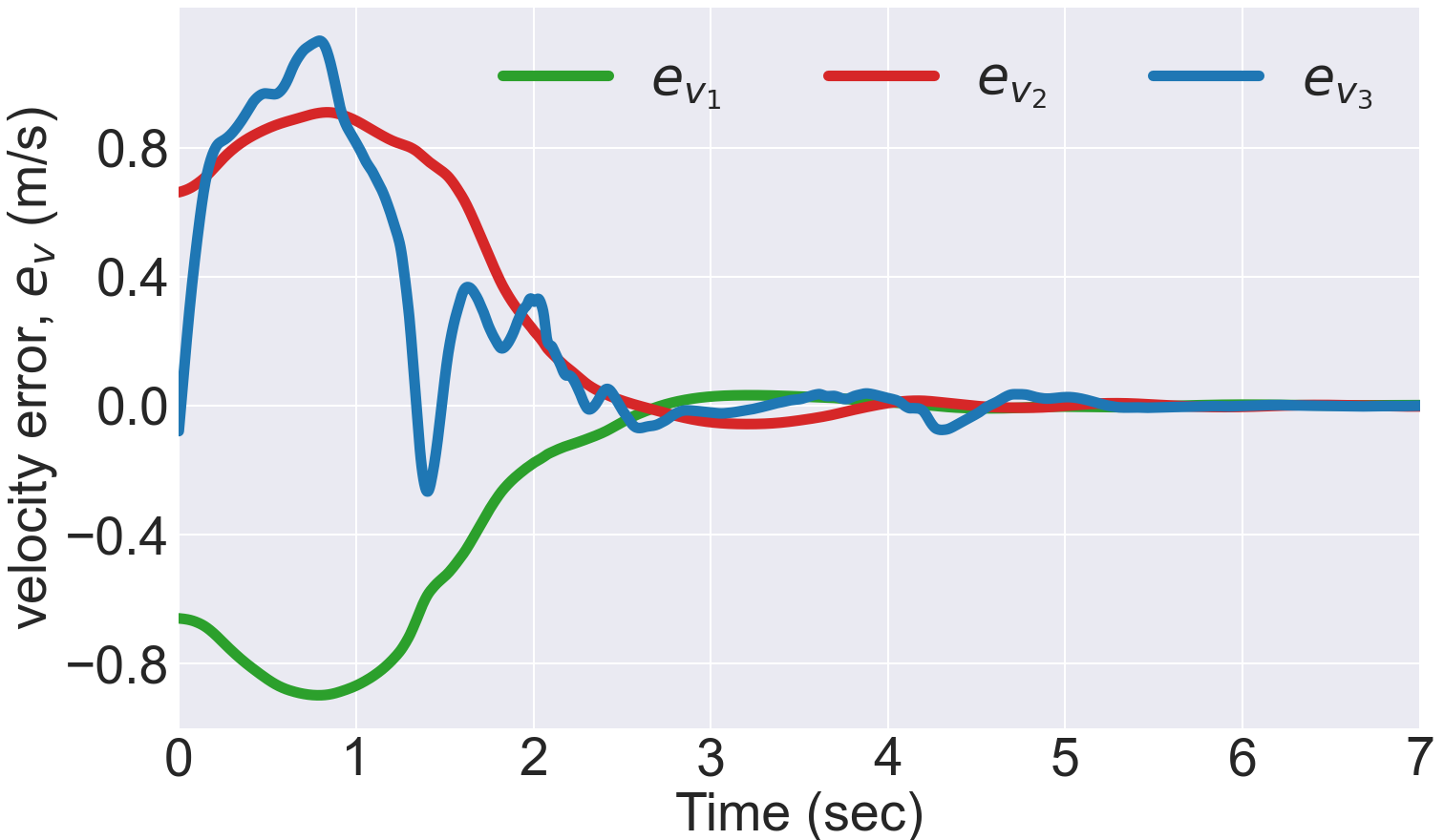} \label{fig:traj_ev}} \\[5pt]
			\subfigure[Angular velocity error $e_\Omega$]{\includegraphics[width=0.47\columnwidth]{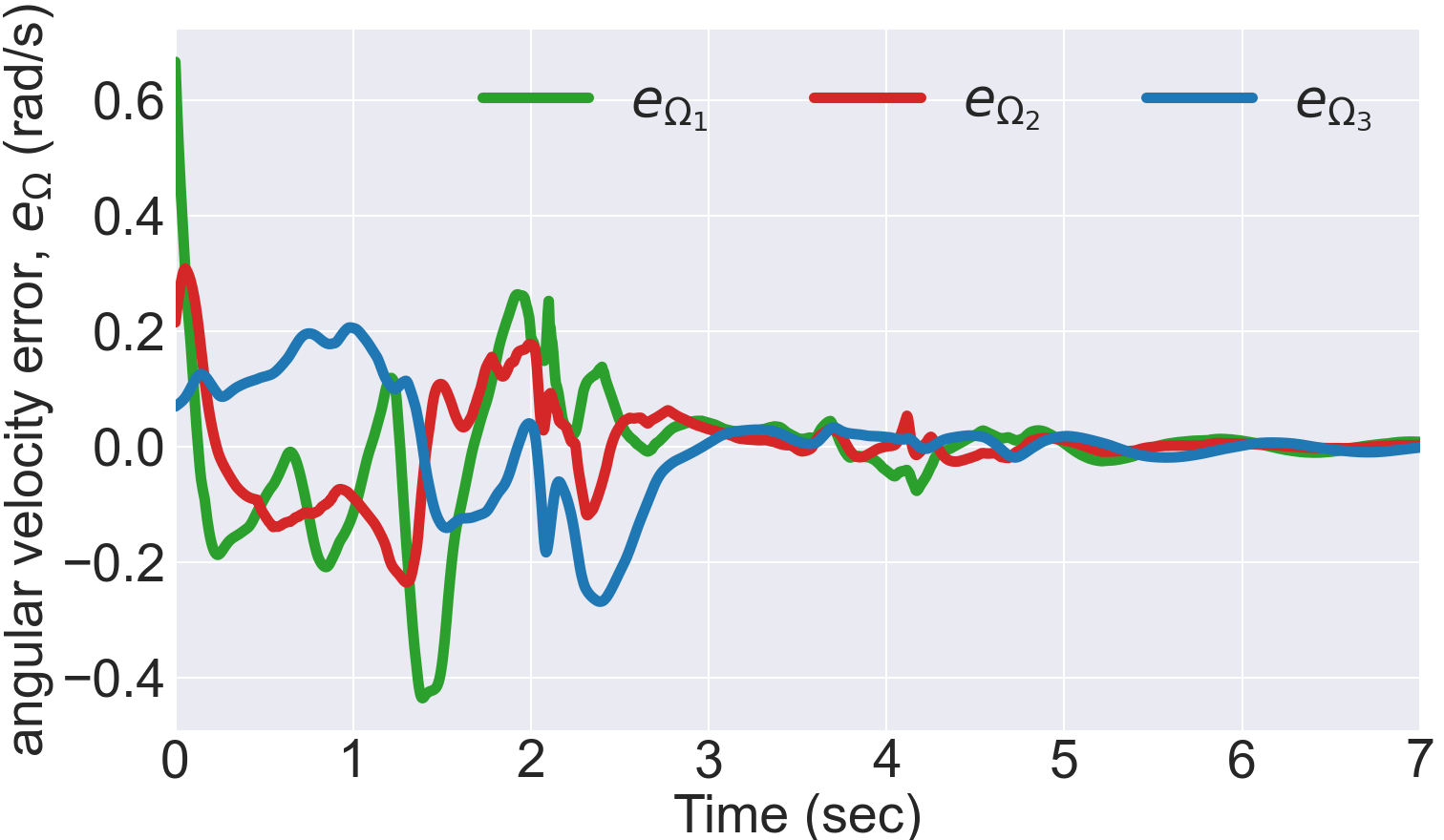} \label{fig:traj_eW}} & \subfigure[Motor thrust ($T_1,T_2,T_3,T_4$)]{\includegraphics[width=0.47\columnwidth]{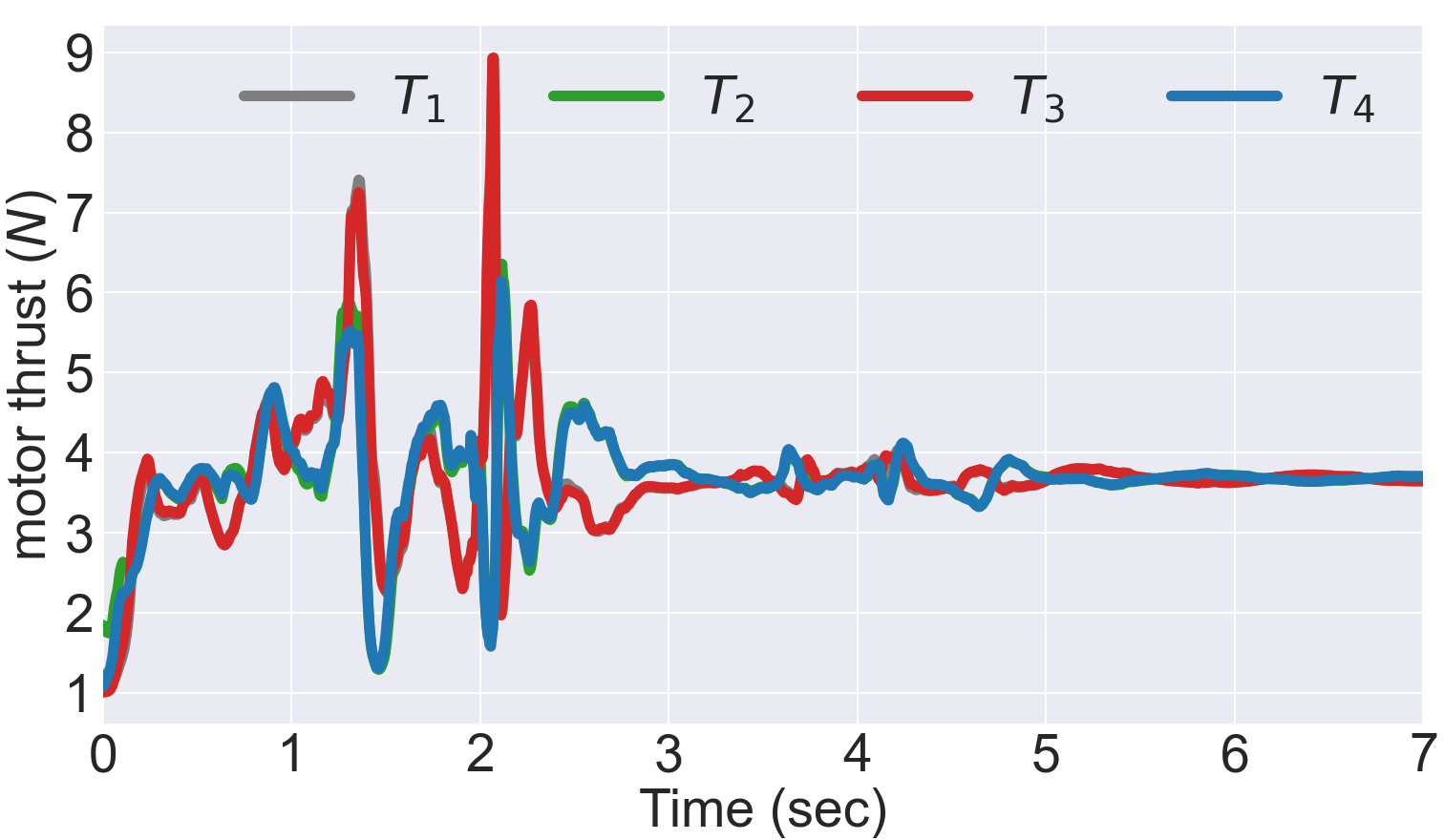} \label{fig:traj_T}} \\[5pt]
			\multicolumn{2}{c}{\subfigure[Attitude $R$]{\includegraphics[width=0.99\columnwidth]{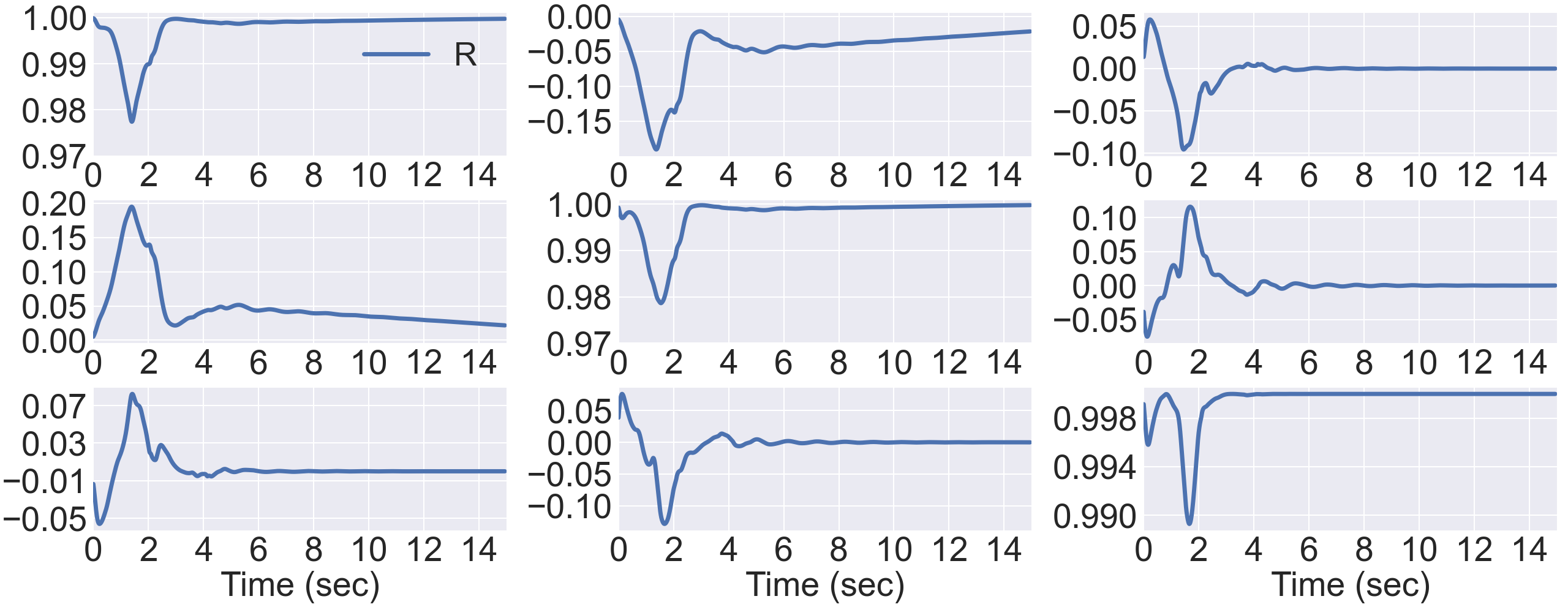} \label{fig:traj_R}}} 
		\end{tabular}
        \caption{Controlled trajectory for a single episode}
		\label{fig:traj}
	\end{figure}





	\section{Conclusion} 
	
	In this paper, we presented a data-efficient reinforcement learning strategy for  quadrotors, by exploiting its symmetry. 
    We identified $\Sph^1$--symmetry of the quadrotor dynamics, which is utilized in the neural network structures for the actor and the critic to reduce the dimension of the input domain by one. 
    Numerical simulation with two popular reinforcement schemes shows that the equivariance property substantially improves sample efficiency, and the trained policy is reasonable. 

	Future work includes incorporating other symmetry properties of the quadrotor.
    For example, since the structure of the quadrotor is symmetrical about its third body-fixed axis, the thrust at the opposite side of the quadrotor can be swapped when it is rotated about the third body-fixed axis by $180^\circ$.
	Also, instead of utilizing the representative element of the equivalent class, the invariance of the value or the policy can be directly imposed via equivariant neural networks. 

	\bibliography{BibTeX}
	\bibliographystyle{IEEEtran}
	
			
\end{document}